  \providecommand\BibTeX{{%
    \normalfont B\kern-0.5em{\scshape i\kern-0.25em b}\kern-0.8em\TeX}}}
\newcommand{\argmin}{\mathop{\rm arg~min}\limits}
\begin{document}

\title{A Feedback Shift Correction in Predicting Conversion Rates under Delayed Feedback}


\author{Shota Yasui}
\authornote{Both authors contributed equally to this research.}
\email{yasui\_shota@cyberagent.co.jp}
\affiliation{%
  \institution{Cyberagent, Inc.}
  \streetaddress{Udagawacho, 40−1 Abema Towers}
  \city{Shibuya}
  \state{Tokyo}
  \postcode{150-0042}
}

\author{Gota Morishita}
\authornotemark[1]
\email{morishit\_gota@cyberagent.co.jp}
\affiliation{%
  \institution{Cyberagent, Inc.}
  \streetaddress{Udagawacho, 40−1 Abema Towers}
  \city{Shibuya}
  \state{Tokyo}
  \postcode{150-0042}
} 

\author{Komei Fujita}
\email{fujita\_komei@cyberagent.co.jp}
\affiliation{%
  \institution{Cyberagent, Inc.}
  \streetaddress{Udagawacho, 40−1 Abema Towers}
  \city{Shibuya}
  \state{Tokyo}
  \postcode{150-0042}
} 

\author{Masashi Shibata}
\email{shibata\_masashi@cyberagent.co.jp}
\affiliation{%
  \institution{Cyberagent, Inc.}
  \streetaddress{Udagawacho, 40−1 Abema Towers}
  \city{Shibuya}
  \state{Tokyo}
  \postcode{150-0042}
}

%

\begin{abstract}
In display advertising, predicting the conversion rate, that is, the probability that a user takes a predefined action on an advertiser's website, such as purchasing goods is fundamental in estimating the value of displaying the advertisement.
However, there is a relatively long time delay between a click and its resultant conversion. Because of the delayed feedback, some positive instances at the training period are labeled as negative because some conversions have not yet occurred when training data are gathered. 
As a result, the conditional label distributions differ between the training data and the production environment.
This situation is referred to as a \textit{feedback shift}. 
We address this problem by using an importance weight approach typically used for covariate shift correction. 
We prove its consistency for the feedback shift. 
Results in both offline and online experiments show that our proposed method outperforms the existing method.

\end{abstract}

\begin{CCSXML}
<ccs2012>
<concept>
<concept_id>10002951.10003260.10003272.10003275</concept_id>
<concept_desc>Information systems~Display advertising</concept_desc>
<concept_significance>500</concept_significance>
</concept>
<concept>
<concept_id>10010147.10010257.10010258.10010262.10010279</concept_id>
<concept_desc>Computing methodologies~Learning under covariate shift</concept_desc>
<concept_significance>500</concept_significance>
</concept>
<concept>
<concept_id>10010147.10010257.10010282.10010283</concept_id>
<concept_desc>Computing methodologies~Batch learning</concept_desc>
<concept_significance>300</concept_significance>
</concept>
</ccs2012>
\end{CCSXML}

\ccsdesc[500]{Information systems~Display advertising}
\ccsdesc[500]{Computing methodologies~Learning under covariate shift}

\keywords{Delayed Feedback, Conversion Prediction, Importance Weight}

\maketitle

\section{Introduction}
Over the last decade, programmatic advertisement (ad) buying through real-time auction has become common in performance display advertising. 
Advertisers have been offered several payment options, such as paying per impression (CPM), paying per click (CPC), and paying per conversion (CPA). 
The CPA option is preferred by advertisers because they would rather pay for a conversion which is more likely to lead to profits. 
Thus, we focus on a CPA model in which advertisers pay only if a user performs a predefined action on their website after clicking on the advertisement. 
In this payment model, accurately predicting a conversion rate (CVR) is essential in estimating the value of an ad impression.
However, there is a delay between a click and its resultant conversion.
It takes some time for a conversion to occur following a click.
In the production environment, the training data are collected right before training a model. 
Therefore, some conversions are not observed yet for samples observed near the training timing.
This leads to mislabeling some samples in training data. 
Consequently, there is a discrepancy between the conditional label distribution of the training data and that of the test data because the test data are tracked for a sufficiently long period to ensure accurate labeling.
It is possible to wait for a fixed time window before assigning a label to ad clicks and then train on the data. 
However, as discussed in \cite{chapelle2014modeling}, a shorter window increases the likelihood of positive training samples being mislabeled as negative.
Training samples are, at the least, as old as a window length; thus a longer window tends to generate a stalled model because of a large shift over time resulting from factors, such as seasonality and changes to ad campaigns.

Work has already been completed regarding the delayed feedback issue. 
To the best of our knowledge, this issue was first addressed in \cite{chapelle2014modeling}. \cite{chapelle2014modeling} assumes that the delay distribution is exponential and proposes two models: one predicts the CVR and the other predicts the delay in conversion. 
Both models are jointly trained via the expectation-maximization (EM) algorithm or the gradient descent optimization.
\cite{imai2018NoDeF} extends this approach and proposes use of a non-parametric model for delay distribution estimation.

Whereas the previous studies focus on the CVR prediction task, \cite{Ktena2019AddressingDF} examines the click-through-rate (CTR) prediction for a video ad. 
On their platform, there is a severe feature distribution shift that necessitates training a model online on fresh data, giving rise to the delayed feedback problem.

In this study, we regard the delayed feedback as a data shift in which there is a disparity between the label distribution in the training and test data although the feature distributions remain the same. 
We term this situation a \textit{feedback shift}. 
This concept is closely related to the covariate shift, wherein there is a disparity between the feature distribution in the training and test data whereas the conditional label distributions remain the same\cite{Shimodaira2000ImprovingPI}. 
Similar to the covariate shift, the feedback shift also results in an inconsistent empirical loss function that degrades the performance of a model. 
To address the feedback shift, we propose the importance weighting (IW) approach, which is a well-known solution to the covariate shift \cite{Shimodaira2000ImprovingPI}.
As with \cite{sugiyama2007iwcv}, the importance weighted estimate of the loss under \textit{feedback shift} is also consistent, as is shown subsequently. 
However, because the IW requires the test data distribution which is unavailable, we have to estimate the IW, and then train a CVR model using the estimated IW.

We conducted two offline and one online experiments. For offline experiments, we used different datasets: a public dataset, the conversion logs dataset provided by Criteo\footnote{https://labs.criteo.com/2013/12/conversion-logs-dataset/} and an in-house dataset provided by Dynalyst\footnote{http://www.dynalyst.io/}.
Using the public data, we conducted the first experiment to demonstrate the effectiveness of our proposed method.
In the second experiment, we used the offline in-house data, and incorporated the IW approach into the field-aware factorization machines (FFM) \cite{Juan2016FieldawareFM}; subsequently we evaluated the derived method, hereafter referred to as FFMIW, on the offline in-house data, and demonstrated its superiority to the FFM under a specific circumstance.
Finally, based on the offline result, we decided to conduct an online A/B test to confirm the effectiveness of our proposed method in the production system.

\section{Related Work}
Some works on CVR prediction ignore the delayed feedback problem\cite{Agarwal2010EstimatingRO, Menon2011ResponsePU, Lee2012EstimatingCR, Rosales2012PostclickCM, Juan2017FieldawareFM}. 
Following the publication of \cite{chapelle2014modeling}, the delayed feedback has been a focus of constant attention. 
Some recent studies have attempted to solve it \cite{Tallis2018ReactingTV, Ktena2019AddressingDF, imai2018NoDeF, Safari2017DisplayAE}.
Furthermore, \cite{chapelle2014modeling} states that the delayed feedback problem is related to positive-unlabeled(PU) Learning.

We considered the delayed feedback problem as a feedback shift. 
The feedback shift can be addressed using an IW approach, similar to the covariate shift correction. 
Covariate shift, also known as sample selection bias, has been extensively studied \cite{Zadrozny2004LearningAE, Huang2006CorrectingSS, Sugiyama2007DirectIE,Shimodaira2000ImprovingPI, Gretton20081CS}.
Another similar concept is termed the label shift, wherein there is a disparity between test and training label distributions ,which is a general case of feedback shift\cite{Lipton2018DetectingAC, Storkey2013WhenTA, Zhang2013DomainAU}.

The delayed feedback in the bandit algorithm has been researched\cite{Joulani2013OnlineLU, Vernade2017StochasticBM, Vernade2018ContextualBU, PikeBurke2017BanditsWD}. 
Whereas the objective in the bandit problem is to sequentially make decisions in order to minimize the cumulative regret, our goal is to predict the CVR in order to derive a bid price in ad auction.
It is also necessary to consider the delay of the reward feedback in the bandit problem. \cite{Joulani2013OnlineLU, Vernade2017StochasticBM, Vernade2018ContextualBU} updates the reward estimator just at the moment that the reward is observed.

\section{Pre-analysis}
We analyze the Criteo and Dynalyst datasets to show that the delay exists in reality.

First, we calculated the CVR at different time intervals in the Criteo dataset. 
Figure \ref{fig:criteo_delay} shows that there is a delay between a click and its resultant conversion. While 30\% of the conversions occur less than 1 hour after the click, one-half of them occur after one day. Additionally, 13\% of them occur after two weeks. These long delays imply that a large portion of the samples in the training data are incorrectly labeled; thus, the label distributions in the training and test data are different.

As discussed in Section 1, one of the ways to circumvent mislabeling arising from the delay is to wait for a fixed time window before assigning a label to the samples. 
However, as shown in Figure \ref{fig:criteo_delay}, to correctly label approximately 90\% of samples, at least two weeks is required. 
If the fixed time window is two weeks, samples that are collected less than two weeks before would be unlabeled. Therefore, they would not be used to train the model to predict the CVR. However, this causes a data shift between the training and test data because of the frequent changes to ad campaigns. For example, 11.3\% of the traffic came from new campaigns \cite{chapelle2014modeling}. 
In addition to the changes to ad campaigns, the arrival of new products and special events affect the CVR.
For example, in the Dynalyst dataset, an upward trend is observed during a specific period in Figure \ref{fig:dyna_game_event}.
Therefore, merely waiting for a fixed time window is not an optimal way to eliminate the effect of delayed feedback in CVR prediction.
Although we can avoid the data shift by using fresh data, the fresh data are likely to be mislabeled due to the delayed feedback as discussed in Introduction.
It is key to use fresh data and eliminate the effect of the delay.

\begin{figure}[h]
\includegraphics[width=6cm, height=3.3cm]{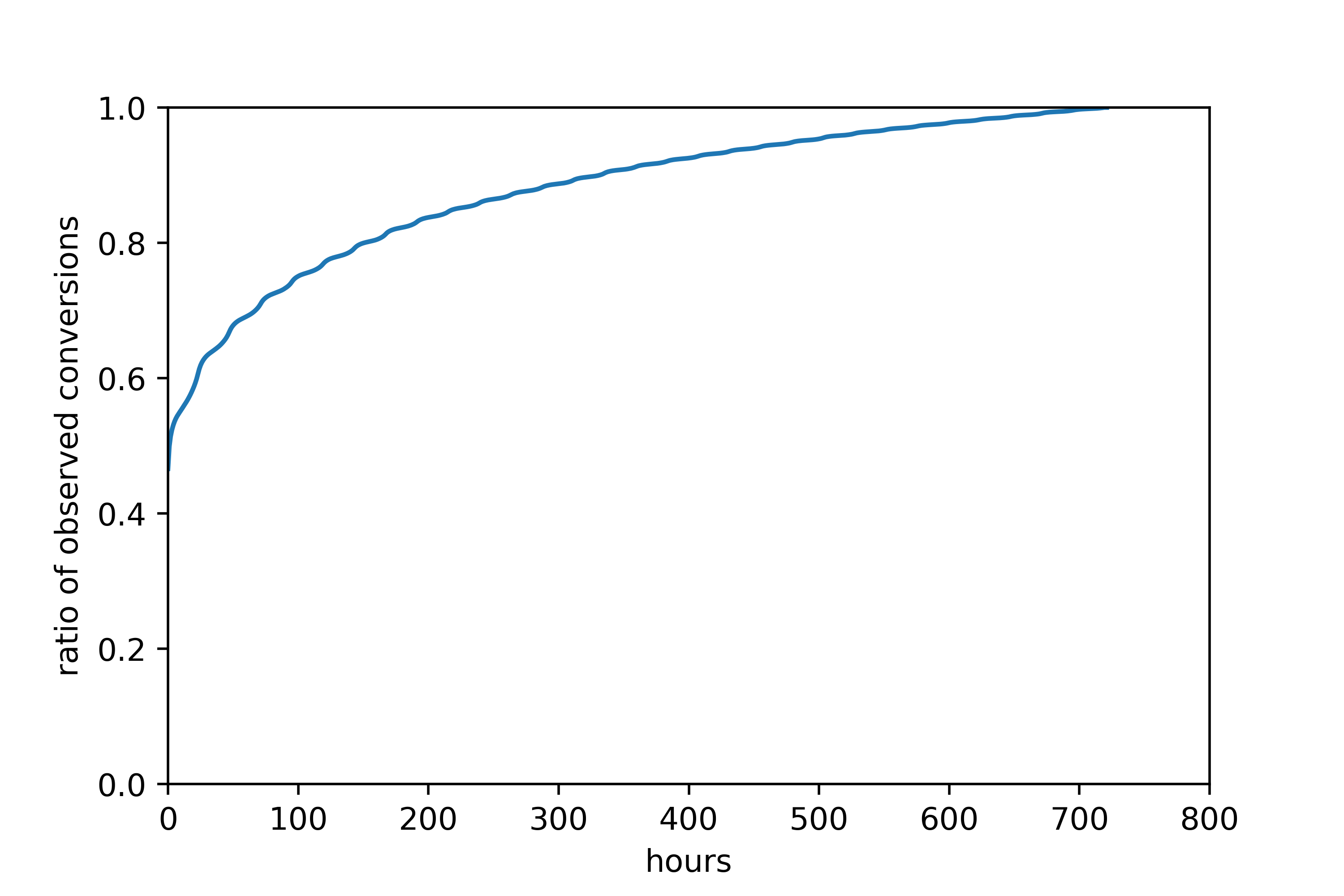}
\caption{Criteo Dataset: Cumulative distribution of the delay between the click and its conversion.}
\label{fig:criteo_delay}
\end{figure}

\begin{figure}[h]
\includegraphics[width=6cm, height=3.3cm]{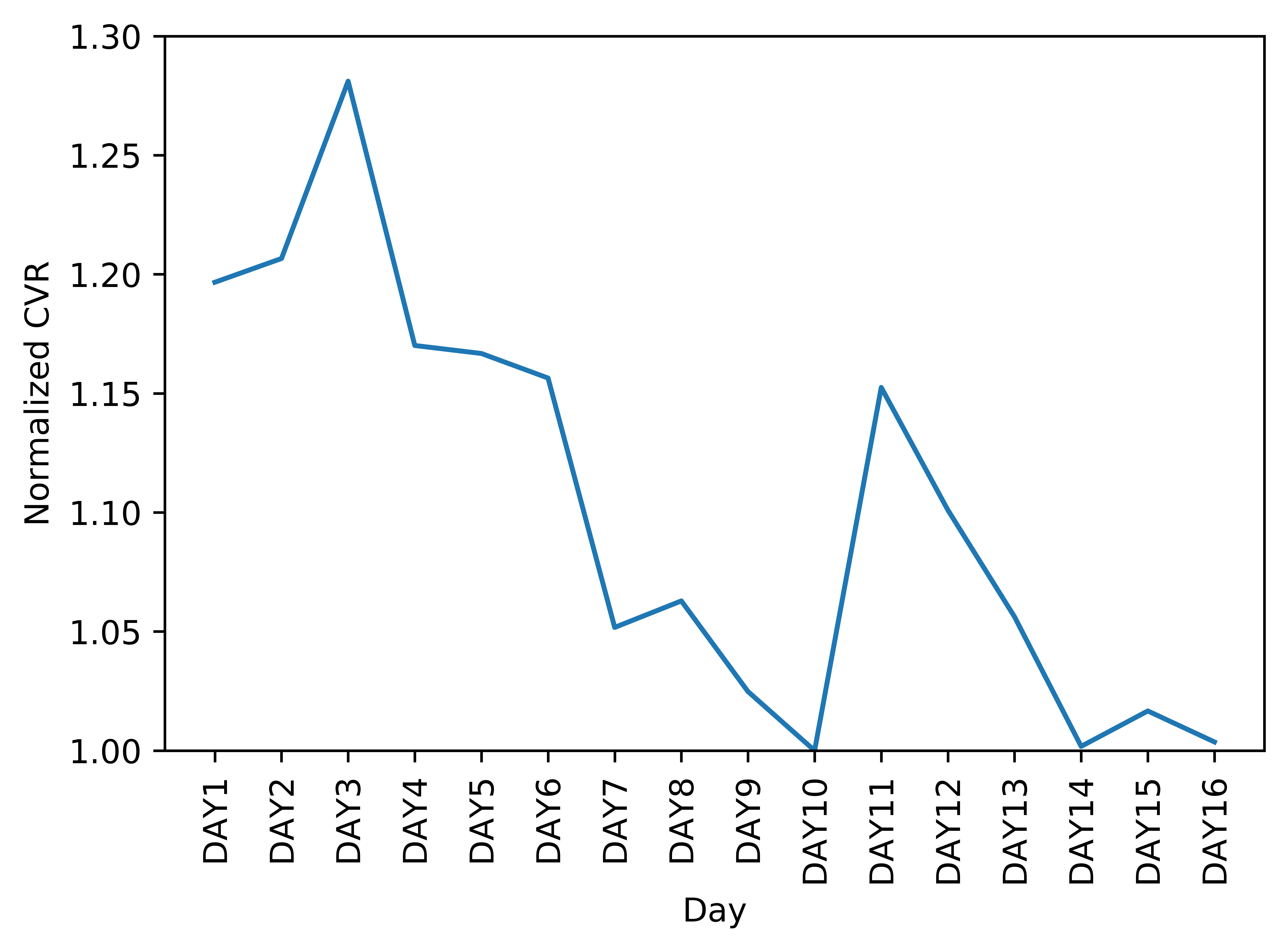}
\caption{Dynalyst Dataset: Normalized observed CVR.}
\label{fig:dyna_game_event}
\end{figure}

Secondly, we calculated the empirical probability density functions of the delay in the Criteo and Dynalyst datasets. 
There is a 24-hour cyclicality in the Criteo dataset because people browse and surf the internet more at a certain times of the day\cite{chapelle2014modeling}. 
The cyclicality patterns of different campaigns are varied, as shown in Figure \ref{fig:criteo_camp_delay}, \ref{fig:dyna_adv_delay}. 
\cite{chapelle2014modeling} ignores the periodicity and assumes the delay follows the exponential distributions. However, our proposed method using the IW can capture the oscillating shapes of a delay distribution if a flexible model is used to estimate the IW. 
This leads to improved CVR prediction.

\begin{figure}[h]
\includegraphics[width=8cm, height=2.8cm]{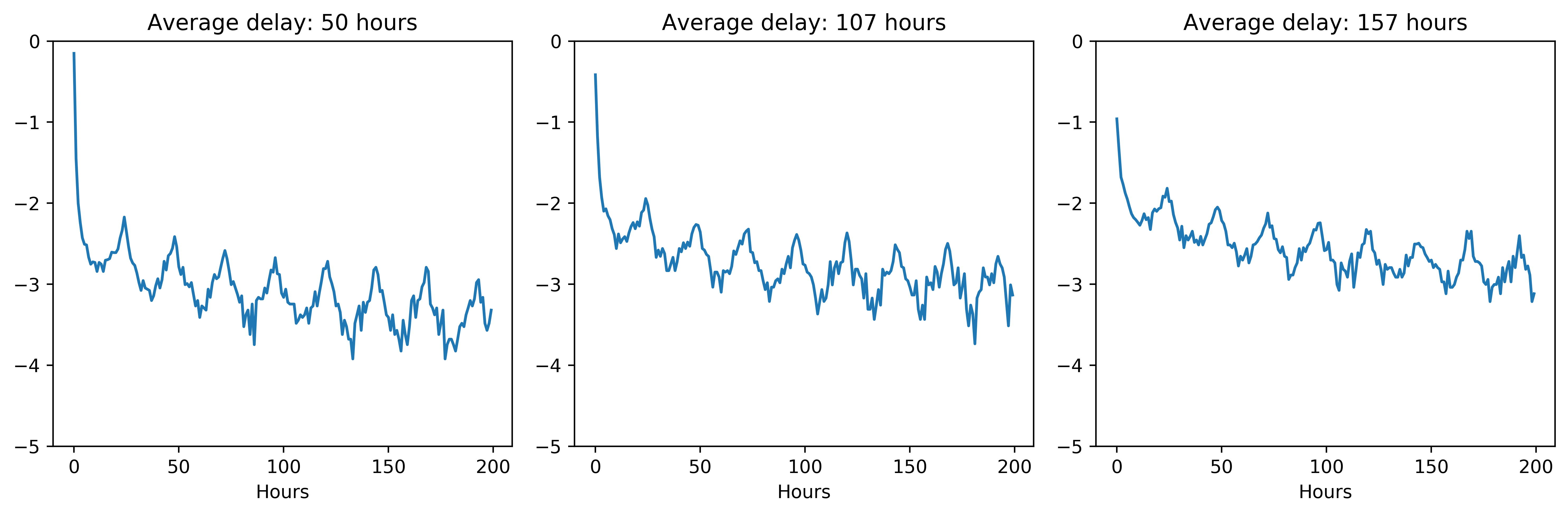}
\caption{Criteo Dataset: Probability density functions of the delays between clicks and conversions for three different campaigns chosen among many campaigns that have different average delays.}.
\label{fig:criteo_camp_delay}
\end{figure}    

\begin{figure}[h]
\includegraphics[width=8cm, height=2.8cm]{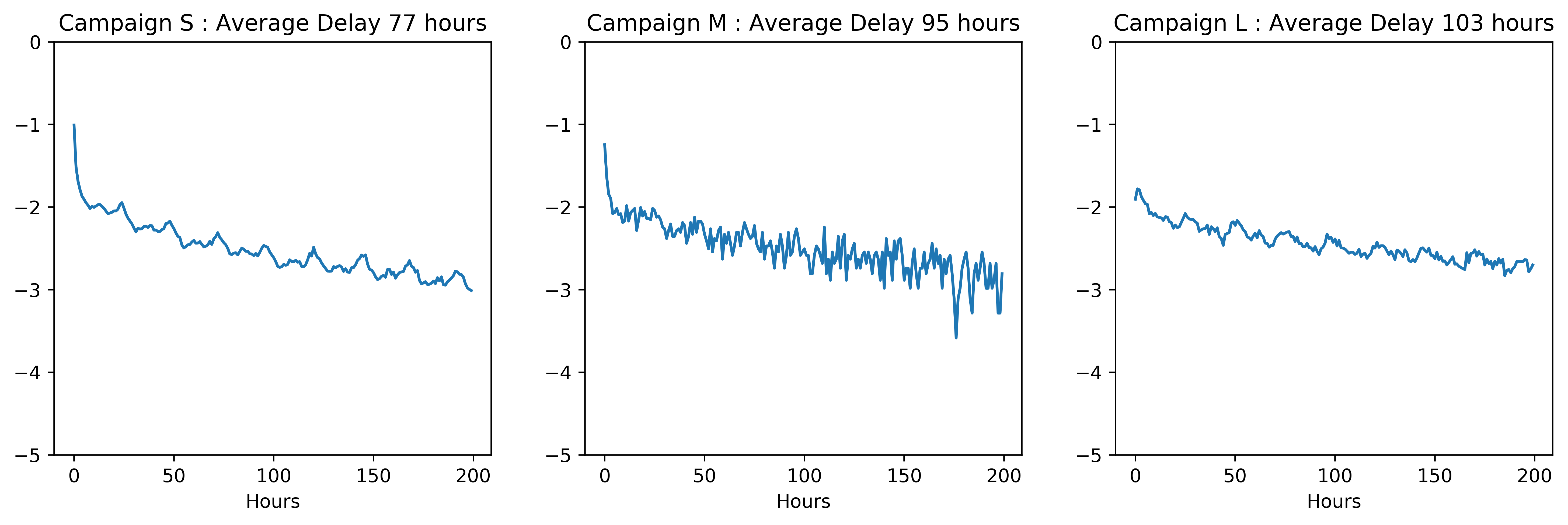}
\caption{Dyanlyst dataset: Probability density functions of the delays between clicks and conversions for three selected campaigns with different average delays.}
\label{fig:dyna_adv_delay}
\end{figure}

\section{Delayed Feedback}
The structure of the delayed feedback in the CVR prediction is described in this section. 
We show that we can see this problem as a feedback shift where the training and test conditional label distributions are different.

\subsection{Delayed Feedback Formulation}
We define some random variables as follows.
\begin{itemize}
    \item $X$:  $\mathcal{X}$-valued random variable of features;
    \item $Y$:  $\{0, 1\}$-valued random variable indicating whether a conversion occurs during the training term if $Y=1$;
    \item $C$:  $\{0, 1\}$-valued random variable indicating whether a conversion occurs if $C=1$. 
    \item $S$: $\{0, 1\}$-valued random variable indicating whether a sample is correctly labeled if $S=1$ in the training data. 
    \item $D$: $\mathbb{R}$-valued random variable of the delay, which is a gap between a click and its resultant conversion. If $C=0$, it is not defined;
    \item $E$: $\mathbb{R}$-valued random variable of the elapsed time between a click and the training time 
\end{itemize}
,where $ \mathcal{X} \subset \mathbb{R}^d$.
In summary, the data structure is $(X, Y, E, D, C, S)$.
Because $C$ and $S$ are unobservable, the training data consist of samples $(x_i, y_i, e_i, d_i)$ where the lower case letter variables correspond to the realization of random variables. Note that when $y_i = 0$, then $d_i$ is empty.

The samples that are labeled as $Y=1$ in the training data are true positive ($C=1$). 
In other words, these samples are correctly labeled($S=1$) because $Y = C$.
Formally, $Y = 1 \Leftrightarrow S=1, C=1$.
In the delayed feedback, however, some positive samples ($C=1$) are mislabeled ($S=0$) when their elapsed time $E$ is shorter than their delay $D$.
Formally, $E < D \Rightarrow S = 0$. 
Hence, they are labeled as $Y=0$ although $C=1$.
For instance, the samples observed right before the end of the training dataset are yet to be converted but would eventually be.
Therefore, the negative samples in the training data consists of false and true ones.
Formally, $Y = 0 \Leftrightarrow C = 0 \text{  or  } S=0$. 

Based on the discussion above, the relation of the conditional distributions of $Y$ and $C$ are as follows:

\begin{align}
    \label{y_c_relation_1}
    P(Y = 1|X=x) &= P(C = 1|X=x)P(S=1| C = 1, X= x), \\
    \label{y_c_relation_0}
    P(Y = 0|X=x) &= P(C = 0|X=x) +  P(S = 0,C = 1 |X= x).
\end{align}
The first equation \eqref{y_c_relation_1} implies that the conditional probability $Y=1$ is equal to the conditional probability that a conversion occurs and is correctly labeled. 
The second equation \eqref{y_c_relation_0} implies that the conditional probability $Y=0$ is equal to the conditional probability that a conversion either does not occur or is mislabeled.

In the CVR prediction, our objective is to estimate $P(C=1|X)$ where it is impossible to observe $C$ in the training data. Therefore, we have to use $Y$ to train a CVR prediction model. 
We regard the delayed feedback as a problem in which there is a discrepancy between the conditional label distribution in the training $P(Y|X)$ and test $P(C|X)$ datasets although the feature distributions $P(X)$ remain the same. We refer to this situation as a feedback shift.

\subsection{Problem Formulation}
Let $L(x, y, \hat{y}): \mathcal{X} \times \mathcal{Y} \times \mathcal{Y} \rightarrow [0, \infty)$ be a loss function. For simplicity, we assume that a model is parametric. Let $\hat{f}(x, \theta)$ denote a model trained to predict $C$ where $\theta \in \Theta \subset \mathbb{R}^b$ is a parameter. 
The generalization error $G$, that is, the expected test error over the training samples, is denoted by \eqref{true_error}
\begin{equation}\label{true_error}
    G \equiv \mathbb{E}_{(x,c) \sim (X, C)} \Big[L \left(x, c; \hat{f}(x, \theta)\right)\Big].
\end{equation}
Let the optimal parameter $$\theta^* \in \argmin_{\theta \in \Theta} G.$$

Our aim is to estimate $\theta^{*}$ to obtain the CVR predictor. 
Typically, $\theta^*$ is estimated using the empirical risk minimization (ERM).

$c_i$ is required to calculate the empirical risk; however,  it is not available as it is not observed during the training period.
If $y_i$ is used instead, the empirical risk is defined as follows:

\begin{equation}\label{erm_error}
\hat{G}^{(n)} \equiv \frac1n\sum_{i=1}^n L \left(x_i, y_i; \hat{f}(x_i, \theta)\right).
\end{equation}
Minimizing the above empirical risk provides an estimator $\hat{\theta}_{ERM}$ that is consistent when $c_i$ and $y_i$ are extracted from the same conditional label distribution\cite{Shimodaira2000ImprovingPI}:

\begin{equation}\label{true_task}
    \hat{\theta}_{ERM}^{(n)} \equiv \argmin_{\theta \in \Theta} \hat{G}^{(n)}.
\end{equation}

In terms of the delayed feedback, there is a feedback shift that causes the discrepancy between the test distribution from which $c_i$ is drawn and the training distribution from which $y_i$ is drawn. 
Hence, the ERM estimator is not consistent for most cases. 
This is represented as follows:

$$ 
\lim_{n \rightarrow \infty} \hat{\theta}_{ERM}^{(n)} \neq \theta^*.
$$

Specifically, a CVR predictor would be prone to downward bias under the feedback shift because from \eqref{y_c_relation_1} and \eqref{y_c_relation_0} we have $P(Y = 1|X=x) \leq P(C = 1|X=x).$

\section{Importance Weight (IW) Approach}
To obtain the consistent ERM estimator, we introduce an  feedback shift importance weight (FSIW). 
First, we provide its theoretical background of FSIW, following which we propose a method for estimating it.

\subsection{Theoretical Background}
The loss weighted using the FSIW is defined below:
\begin{equation}\label{iwerm_error}
\hat{G}_{IW}^{(n)} \equiv \frac1n\sum_{i=1}^n \frac{P(C = y_i|X=x_i)}{P(Y = y_i|X=x_i)} L \left(x_i, y_i; \hat{f}(x_i, \theta)\right).
\end{equation}
Under the feedback shift, the loss weighted using the FSIW is consistent as follows.

\begin{theorem}
The loss weighted using the FSIW is consistent under the feedback shift, that is,
\begin{equation}
\lim_{n \rightarrow \infty} \hat{G}_{IW}^{(n)} = \mathbb{E}_{(x,c) \sim (X, C)} \Big[L \left(x, y; \hat{f}(x_i, \theta)\right)\Big].
\end{equation}
\end{theorem}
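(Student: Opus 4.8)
The plan is to prove the statement in two stages: first invoke the law of large numbers to replace the empirical average in \eqref{iwerm_error} by an expectation taken under the training distribution, and then rewrite that expectation so that the feedback-shift importance weight cancels the training label distribution and restores the test label distribution.

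First I would observe that the summands in \eqref{iwerm_error} are i.i.d.\ realizations of the function $g(X,Y)$, where $g(x,y) \equiv \frac{P(C=y|X=x)}{P(Y=y|X=x)} L(x,y;\hat f(x,\theta))$ and $(X,Y)$ follows the training distribution. Provided this variable is integrable---which holds, for instance, whenever the loss and the weight are bounded---the strong law of large numbers yields
\[
\lim_{n\to\infty} \hat G_{IW}^{(n)} = \mathbb{E}_{(x,y)\sim(X,Y)} \left[ \frac{P(C=y|X=x)}{P(Y=y|X=x)} L(x, y; \hat f(x,\theta)) \right].
\]

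Next I would evaluate the right-hand side by conditioning on $X$ and summing over the two label values. Expressing the expectation as an integral against $P(X=x)$ with an inner sum over $y\in\{0,1\}$, each term carries a factor $P(Y=y|X=x)$ coming from the training conditional, which exactly cancels the denominator of the importance weight. What survives is $\sum_{y\in\{0,1\}} P(C=y|X=x)\,L(x,y;\hat f(x,\theta))$ integrated against $P(X=x)$. The decisive ingredient here is the defining feature of the feedback shift: the feature distribution $P(X)$ coincides in the training and test data, so the same $P(X=x)$ appears throughout and needs no reweighting. Because $C$ and $Y$ take values in the common label set $\{0,1\}$, relabeling the summation index turns this expression into $\mathbb{E}_{(x,c)\sim(X,C)}[L(x,c;\hat f(x,\theta))]$, which is precisely the generalization error $G$ that the theorem claims as the limit.

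The hard part will not be the cancellation itself but the two regularity conditions it silently relies on. The weight is only well defined where $P(Y=y|X=x)>0$ on the support of the training data; using \eqref{y_c_relation_1} and \eqref{y_c_relation_0} I would argue that the mislabeling mechanism keeps this denominator strictly positive, since a $Y=0$ label retains positive mass whenever a conversion either fails to occur or remains pending at training time. Moreover, to apply the law of large numbers the weighted loss must be integrable, which requires controlling the tail of the ratio $\frac{P(C=y|X=x)}{P(Y=y|X=x)}$; the cleanest route is to impose uniform boundedness of this weight as a standing assumption, after which the argument goes through verbatim.
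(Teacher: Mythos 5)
Your proposal is correct and follows essentially the same route as the paper's own proof: apply the law of large numbers to pass from the empirical weighted loss to an expectation under the training distribution $(X,Y)$, then cancel the weight denominator against the training conditional $P(Y=y|X=x)$, using the fact that $P(X)$ is shared between training and test to recover $\mathbb{E}_{(x,c)\sim(X,C)}[L(x,c;\hat f(x,\theta))]$. Your added attention to the regularity conditions (positivity of $P(Y=y|X=x)$ and integrability of the weighted loss) is a refinement the paper silently omits, but it does not change the argument.
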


\begin{proof}
As the feature distribution does not change, and because of the law of large numbers, the following holds:
\begin{align}\label{consistency}
  \lim_{n \rightarrow \infty} \hat{G}_{IW}^{(n)} &= \mathbb{E}_{(x,y) \sim (X, Y)} \Bigg[\frac{P(C = y|X=x)}{P(Y= y|X=x)}L \left(x, y; \hat{f}(x, \theta)\right)\Bigg{]} \nonumber\\
  = \iint & \frac{P(C = y|X=x)}{P(Y=y|X=x)} L \left(x, y; \hat{f}(x, \theta)\right)P(X=x)P(Y=y|X=x)dydx \nonumber \\
  = \iint & {P(C=y|X=x)} L \left(x, y; \hat{f}(x, \theta)\right)P(X=x)dydx \nonumber \\
  = \iint & {L \left(x, c; \hat{f}(x, \theta)\right) P(C=c|X=x)} P(X=x)dcdx \nonumber \\
  =  &\mathbb{E}_{(x,c) \sim (X, C)} \Big[L \left(x, c; \hat{f}(x, \theta)\right)\Big]. \nonumber
\end{align}
\end{proof}

Note that this proof does not assume a specific model, a loss function, or any parameter learning method. 
Thus, the approach is valid for many models and algorithms.

Since \eqref{iwerm_error} is consistent with the true loss \eqref{true_error}, we can obtain a consistent estimator of $\theta^*$ by minimizing it. However, it would be impossible to directly estimate FSIW itself because its numerator is $P(C=y|X=x)$, which is what we finally wish to predict. 
Fortunately, in the delayed feedback case, we can indirectly estimate the FSIW. In the next section, we will explain how to estimate the FSIW in this situation.

\subsection{Estimation of FSIW}
From \eqref{y_c_relation_1} and \eqref{y_c_relation_0}, we obtain
\begin{eqnarray}\label{fsiw_separate}
    \frac{P(C = 1|X=x)}{P(Y = 1|X=x)} & = \frac{1}{P(S=1|C = 1,X= x)} \\ 
    \label{fsiw_separete2}
    \frac{P(C = 0|X=x)}{P(Y = 0|X=x)} & = 1 - \frac{P(S=0, C=1|X=x)}{P(Y=0|X=x)}
\end{eqnarray}
Therefore, instead of directly estimating the FSIW $\frac{P(C |X=x)}{P(Y|X=x)}$, we separately estimate the reciprocal of the probability of the occurrence of true positives $P(S=1|C = 1,X=x)$ and the probability of the occurrence of true negatives $(1 - \frac{P(S=0, C=1|X=x)}{P(Y=0|X=x)})$ using the elapsed time after a click as well as the other features.

To estimate these two probabilities, we prepare training data by artificially creating a situation in which delayed feedback occurs using the following steps.
First, we set a hypothetical deadline $\tau$ referred to as a \textit{counterfactual deadline}. 
Secondly, we discard samples that are clicked later than the counterfactual deadline. 
If the counterfactual deadline is sufficiently long, $Y=C$; thus we assume so.
In the Experiment Section, we analyze the performance of the proposed method in an instance where this assumption is violated.
Furthermore, we create a new elapsed time $e'_i$ between a click timestamp and the counterfactual deadline.
Thirdly, we label these remaining samples $S$. In the case where $Y=1$, we label $S$ according to whether the samples are converted before or after the counterfactual deadline. When $Y=0$, we set $S=1$.
Consequently, we obtain the \textit{artificial dataset} that has labels $S$.

In estimating $P(S=1|C=1,X=x)$, we only use the samples that are converted in the artificial dataset because $C=1$ is the same as $Y=1$ in this data.
After training a model $M$ using these samples, we make predictions of $S$ on the original training data with $Y=1$ including the data used to train the model to estimate $P(S=1|C=1, X=x)$.
\begin{algorithm}[t]
    \caption{Pseudo code of FSIW estimation}
    \label{iw}
    \begin{algorithmic}[1]
    \small
        \REQUIRE train data $\mathcal{D} = \{(x_i, y_i, e_i, d_i, ts_i)\}$, a counterfactual deadline $\tau$, model to estimate FSIW $M$, and $T$ is a timestamp when the data $\mathcal{D}$ are collected, where $x_i$ is a feature vector, $y_i$ is an observed label, $e_i$ is elapsed time since a click timestamp, $d_i$ is a delay which is a gap between a click timestamp and a CV timestamp, and $ts_i$ is a click timestamp.
        \ENSURE FSIW
        \STATE{$\mathcal{D}_{iw}^1, \mathcal{D}_{iw}^0$ = $\phi$}
        \FOR{$i = 1$ to {\em number of samples}}
            \IF{$ts_i < T - \tau$ and $y_i = 1$} 
                    \IF{$ts_i + d_i < T - \tau$} 
                        \STATE {Label the sample $i$ as $s_i=1$}
                    \ELSE
                        \STATE{Label the sample $i$ as $s_i = 0$}
                    \ENDIF 
                    \STATE{Insert the sample $(x_i, e_i - \tau, s_i)$ to $\mathcal{D}_{iw}^1$}
            \ENDIF
            \IF{$ts_i < T - \tau$ \AND ($ts_i + d_i >= T - \tau$ \OR $y_i$ =0)} 
                \IF{$y_i$ = 0} 
                    \STATE {Label the sample $i$ as $s_i=1$} 
                \ELSE 
                    \STATE{Label the sample $i$ as $s_i = 0$} 
                \ENDIF 
                \STATE{Insert the sample $(x_i, e_i - \tau, s_i)$ to $\mathcal{D}_{iw}^0$}
            \ENDIF
        \ENDFOR
        \STATE{Train $M$ on $\mathcal{D}_{iw}^1$ and predict $s$ on $\mathcal{D}$ with the label $y=1$}
        \STATE{Train $M$ on $\mathcal{D}_{iw}^0$ and predict $s$ on $\mathcal{D}$ with the label $y=0$}
        \RETURN{the reciprocal of the prediction for $y=1$ and the prediction for $y=0$, which are FSIW.}
    \end{algorithmic}
\end{algorithm}
To estimate $(1 - \frac{P(S=0, C=1|X=x)}{P(Y=0|X=x)})$, we use samples with $Y=0$ and those with $S=0$ in the artificial dataset. For the training and prediction process, the same procedure is conducted as the estimation of $P(S=1|C=1, X=x)$.
Note that the original elapsed time $e_i$ is used instead of $e'_i$ during prediction.
The detailed procedure is shown in Algorithm \ref{iw}.

\section{Experiment}
We conducted two different experiments. First, we evaluated our proposed method on the Criteo dataset\footnote{https://labs.criteo.com/2013/12/conversion-logs-dataset/} comparing it to the state-of-the-art method to show that the proposed method is more efficient.

Secondly, we incorporate the IW approach into the FFM (FFMIW) and evaluate the derived method, FFMIW on the offline in-house dataset that has three campaigns. 
Based on the offline evaluation, we conducted an A/B test on one of the campaigns to test the effectiveness of the proposed method.

\subsection{Public Dataset: Criteo Dataset}

\subsubsection{Dataset and Metrics}
We use the Criteo dataset used in \cite{chapelle2014modeling} to evaluate the proposed method. The experimental process is identical to that of \cite{chapelle2014modeling} to equally compare the method. 
The feature engineering, such as feature crossing and feature hashing is the same as that of \cite{chapelle2014modeling}. It also includes the timestamps of the clicks and those of the conversions, if any. Additionally, we include the elapsed time following a click, as a feature to estimate FSIW. We divide the original dataset into seven datasets as follows.
For each dataset, the training data starts at a specific time, and  
ends 3 weeks from that point. The test data begin immediately after the training data, ending in 1 day\footnote{The test data were tracked for 30 days; thus samples in the test data are tracked long enough to ensure that they are accurately observed.}.
For the next dataset, the training data start the day after the starting point of the previous training data, and so forth. 

The metrics used to evaluate our method are log loss (LL); area under the precision-recall curve (PR-AUC); and normalized log loss (NLL) that is the log loss normalized by that of the naive predictor that always predicts the average CVR of the training set. 
The PR-AUC is a more commonly used metric because it is more sensitive to skewed data than the AUC, and generally conversion log data are skewed\cite{Ktena2019AddressingDF}. 
In display advertising, the predicted probabilities are important because they are directly used to compute the value of an impression that is equal to a bid in an ad auction. 
Therefore, the LL and NLL are more important than PR-AUC. 
Moreover, the LL heavily depends on the mean of the label in the training dataset, that is, a CVR in the context of this setting. 
Hence, the NLL is a more effective metric for assessing the performance of the CVR predictors than the LL because the NLL is less sensitive to the background CVR\cite{He2014PracticalLF}. 

\subsubsection{FSIW Estimation and Hyperparameters}
In this experiment, we estimate the FSIW separately as mentioned in Section 5.2, using the LightGBM \cite{Ke2017LightGBMAH}.
We use the different hyperparamter settings for these two estimates. 
For estimating $P(S=1|C=1, X=x)$, the learning rate is 0.01, number of leaves is 64, and maximum depth is 6.
For estimating $1 - \frac{P(S=0, C=1|X=x)}{P(Y=0|X=x)}$, the learning rate is 0.01, number of leaves is 63, and maximum depth is 6.
The early stopping technique was applied for both settings to decide the number of trees. 
Note that a seven-day counterfactual deadline is set.
We use the estimated FSIW as a sample weight when training the model to predict the CVR. 

For learning a \textit{naive} logistic regression (LR) model that is only trained on the training data without taking the delay into consideration, we use the code provided with the Criteo dataset in \cite{chapelle2014modeling}. 
The reported LL in \cite{chapelle2014modeling} is reproduced when the L2 regularization parameter is 100. 
We applied this setting to all LR models to predict the CVR during this experiment to establish a fair basis of comparison of the methods.
\begin{center}
\begin{table}[h]
\small
\begin{tabular}{ c | c c c}
  & LR & DFM & LR-FSIW  \\ \hline
 LL & 0.4076 & 0.3989 & \textbf{0.3928*} \\ \hline
 PR-AUC & 0.6345 & 0.6481 & 0.6482 \\ \hline
 NLL & 25.21 & 27.33 & \textbf{28.02*}  \\ \hline
\end{tabular}
\caption{\label{tab:results}{Average metrics . LR: Logistic Regression without any consideration; DFM: Delayed Feedback Model proposed in \cite{chapelle2014modeling}; LR-FSIW: Logistic regression with FSIW. * means statistical significance in comparison to DFM.}}
\end{table}
\end{center}
\subsubsection{Result}
We compare our suggested model to the DFM\cite{chapelle2014modeling} and LR. 
We use LR equipped with the FSIW as our proposed model (LR-FSIW).
The result is provided in Table \ref{tab:results}. 
In the rest of the study, we used bootstrap \cite{Efron:2016:CAS:2994458} to calculate a 95\% confidence interval.
Our proposed method improves the LL by 1.5\% and the NLL by 2.5\% compared to the DFM and these improvements are statistically significant. 
However, there is no statistically significant difference in PR-AUC. 

To demonstrate the feasibility of our proposed method, we measured the training time of the two models for seven datasets. The time reported is the total amount of time. Whereas it takes approximately 140 h to train the DFM, the proposed method requires approximately 2.1 h\footnote{DFM is implemented in Cython. https://github.com/CyberAgent/delayedFeedback. We used N1 series with 8 CPU cores in Google Cloud Platform Compute Engine here.}.
\subsubsection{Counterfactual deadline}
In the Criteo dataset, an observational period which is a duration of tracking samples to fix their labels is 30 days. It would be natural to set a 30-day counterfactual deadline. However, the deadline is so long that the data to be used to estimate the FSIW become obsolete. 
Hence, it is necessary to shorten the counterfactual deadline, however, there is no means of setting the counterfactual deadline in a natural manner. 
Therefore, we vary the counterfactual deadline from one to seven days to evaluate the performance. The result, as shown in Figure \ref{fig:cf_nll}, confirms the stability of our proposed method.
\begin{figure}[h]
\includegraphics[width=5cm, height=2.8cm]{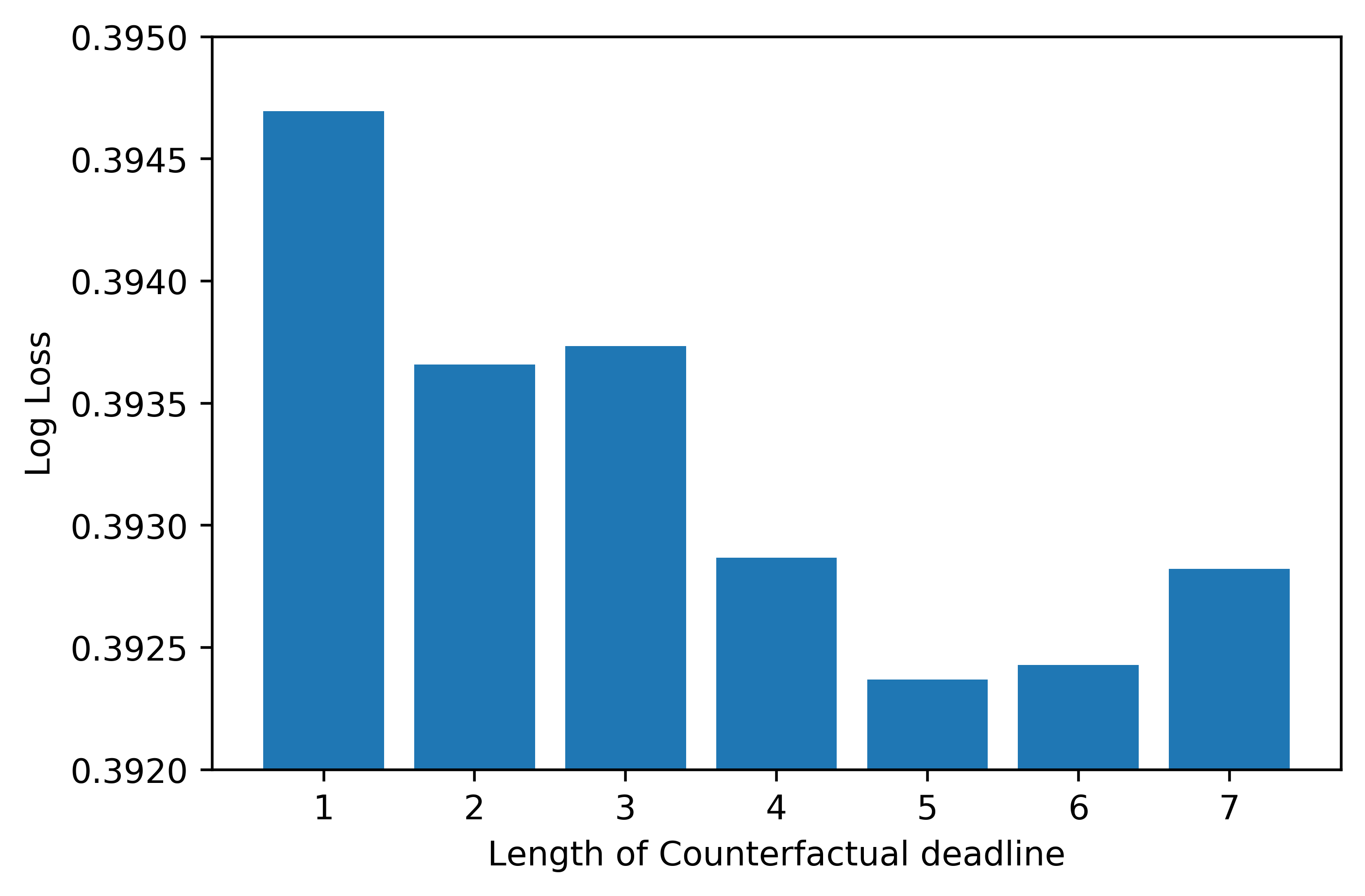}
\caption{LL of different counterfactual deadline lengths}
\label{fig:cf_nll}
\end{figure}
\subsection{Dataset: Dynalyst Dataset}
\subsubsection{Dataset and Metrics}
The experiments are performed using the in-house data provided by Dynalyst. The dataset has three campaigns, the candidates on which our proposed method would be tested online.

In our production environment, the observational period of each campaign is different because it is decided by different advertisers. 
For instance, Campaign L has a 1-week-long observational period, Campaign M has a 3-day-long observational period, and Campaign S has a 1-day-long observational period.
The data has eight categorical features and five numerical features that are categorized. 
Although we use the hashing trick, the dimensionality is $10^5$. The datasets are divided into 16 sets, as in the previous experiment. 
The difference is that there is a 1-day validation set between the training and test data. Each dataset has a 13-day training set, 1-day validation set, and 1-day test set. 
\subsubsection{FSIW Estimation and Hyperparameters}
In our production environment, we create a model for each campaign. Thus, we trained three models and evaluated them for each campaign.

The hyperparameters setting used to estimate the FSIW are the same, as in the previous experiment. 
In this experiment, the observational period is designated the counterfactual deadline. For example, the observational period in Campaign L is seven days. 
Therefore, a seven-day counterfactual deadline is set. 
Because samples collected 7 days before are correctly labeled, it is reasonable for the counterfactual deadline to be set in such a manner.

For learning a CVR predictor, we used the FFM \cite{Juan2016FieldawareFM}. The hyperparameters are as follows. There are four latent factors, and the regularization parameter is 0.00002; these settings are identical for the FFM and FFMIW. 
In addition, we decide the number of iterations by applying early stopping using the validation set; the validation set is weighted according to the FSIW when we train FFMIW.
Finally, we combined the training and the validation set, following which trained the model.
\subsubsection{Result}
We compare the FFM and FFMIW. 
We evaluated 16 sets, and report the average metrics in Table \ref{tab:ffm_results} for three different campaigns.
In all the campaigns, the FFMIW seems to be better than the FFM. 
However, only the difference in the NLL of Campaign L is statistically significant. 
As discussed in Section 6.1.1, the NLL is the most important metric in the production environment; thus we concluded that the FFMIW outperformed the FFM in Campaign L only. 

This is because Campaign S and M have relatively shorter observational period, which makes the delay less influential. 
\begin{table}[]
\small
\begin{tabular}{l|l|l|l|l}
                            &       & LL              & PR-AUC & NLL             \\ \hline
\multirow{2}{*}{Campaign L} & FFM   & 0.3523          & 0.1612 & 1.7197 \\ \cline{2-5} 
                            & FFMIW & \textbf{0.3500} & \textbf{0.1660} & \textbf{2.304*}  \\ \hline
\multirow{2}{*}{Campaign M} & FFM   & 0.2409 & 0.0808 &  0.2160 \\ \cline{2-5} 
                            & FFMIW & \textbf{0.2401}  & \textbf{0.0828} & \textbf{0.3771}          \\ \hline
\multirow{2}{*}{Campaign S} & FFM   & 0.4026          & 0.2055 & 2.9953          \\ \cline{2-5} 
                            & FFMIW & \textbf{0.3967} & \textbf{0.2058} & \textbf{3.361} \\
\end{tabular}
\caption{\label{tab:ffm_results}{Average metrics of FFM and FFMIW for 3 different campaigns. * denotes statistical significance.}}
\end{table}
\subsection{Online Experiment}
According to our offline experiment results, the proposed method outperforms the existing method in Campaign L; thus we conducted 14 days of A/B testing for the FFM and FFMIW in Campaign L.

We trained the FFM and FFMIW once a day with a 14-day-long training data. 
The number of iterations and the hyperparameters are decided in the same way as described in Section 6.2.
During the A/B test period, we equally randomly selected one of the two models to predict the CVR every time a bid request arrives. 
Note that this A/B test is applied to approximately one million advertising impressions in this period.

The results are listed in Table \ref{tab:ab_results}. 
We observed that there is a statistically significant increase in the number of conversions(CV) and the consumed costs compared to the FFM. 
This is because since the FFM ignores the delayed feedback and its predicted values are thus subject to downward bias. 
However, the FFMIW considers the delay, and thus, places relatively higher bids. This results in the higher costs and the acquisition of more CV. 
The CPA was 2\% lower. However, it was not a statistically significant difference. These results indicate that the FFMIW incurred more costs, and performed similarly to the FFM in obtaining a conversion. 
For the CPA model, a higher cost is suitable if a CPA is the same or lower. 
This is because advertisers are able to conduct a larger campaign, and the total sales of the product increases. 
Therefore, the FFMIW outperformed the FFM in our production environment.
\begin{table}[ht]
\centering
\begin{tabular}{ccc}
  \hline
 CV & Cost & CPA \\ 
  \hline
 +31\%* & +28\%* & -2\% \\ 
   \hline
\end{tabular}
\caption{\label{tab:ab_results}{Online relative comparison of FFM and FFMIW. The values shown are the relative change in the FFMIW against the FFM. * denotes statistical significance.}}
\end{table}
\section{Conclusion}
In this study, we tackle on the delayed feedback in CVR prediction by using the IW technique, which constructs the consistent empirical loss.
We empirically show that our proposed method performs better than the existing methods in the Criteo dataset.
Furthermore, we incorporate the IW approach into the FFM, and compare its performance to that of the FFM without the IW using the in-house dataset.
Finally, we conducted an online A/B test to confirm the effectiveness of our proposed method in the production system.

\bibliographystyle{ACM-Reference-Format}
\bibliography{atmf}

\end{document}